\documentclass[accepted]{uai2026} % for initial submission
\usepackage[american]{babel}
\usepackage{natbib} % has a nice set of citation styles and commands
\usepackage{mathtools, amsmath, amssymb, amsthm, dsfont} % amsmath with fixes and additions
\usepackage{booktabs} % commands to create good-looking tables
\usepackage{tikz} % nice language for creating drawings and diagrams

\usepackage[ruled,vlined,linesnumbered]{algorithm2e}
\SetCommentSty{textnormal}                       

\let\oldnl\nl
\newcommand{\nonl}{\renewcommand{\nl}{\let\nl\oldnl}}

\usepackage{thmtools}
\usepackage{thm-restate}

\newtheorem{theorem}{Theorem}
\newtheorem{lemma}[theorem]{Lemma}

\newtheorem{remark}{Remark}
\newtheorem{assumption}{Assumption}

\DeclareMathOperator*{\argmin}{argmin}

\renewcommand{\S}{\mathcal{S}}
\newcommand{\A}{\mathcal{A}}

\newcommand{\norm}[1]{{\|#1\|}}
\newcommand{\proj}[1]{\mathrm{P}_{#1}}

\newcommand{\aggV}{\bar V}
\newcommand{\aggQ}{\bar Q}
\newcommand{\opt}{\mathrm{opt}}
\newcommand{\bellop}[1]{\mathcal{B}_{#1}}

\newcommand{\nodeset}{\mathcal{V}}
\newcommand{\edgeset}{\mathcal{E}}

\newcommand{\tree}{\mathcal{T}}
\newcommand{\leaf}{\mathrm{leaf}}
\newcommand{\leafNode}[1]{\nodeset_\leaf(#1)}
\newcommand{\expand}[1]{\mathrm{Expd}(#1)}
\newcommand{\collapse}[1]{\mathrm{Cllps}(#1)}
\newcommand{\est}{\mathrm{est}}
\newcommand{\children}[1]{\mathrm{Chldrn}(#1)}
\newcommand{\parent}[1]{\mathrm{Prnt}(#1)}

\title{Performance-Driven Environment Abstraction with Multi-Timescale Learning}

\author[1]{\href{mailto:<yguan44@gatech.edu>?Subject=Your UAI 2026 paper}{Yue Guan}}
\author[2]{\href{mailto:<dmaity@charlotte.edu>?Subject=Your UAI 2026 paper}{Dipankar Maity}}
\author[1]{\href{mailto:<tsiotras@gatech.edu>?Subject=Your UAI 2026 paper}{Panagiotis Tsiotras}}
\affil[1]{%
    School of Aerospace Engineering\\
    Georgia Institute of Technology\\
    Atlanta, GA, USA
}
\affil[2]{%
    Department of Electrical and Computer Engineering\\
    University of North Carolina at Charlotte\\
    Charlotte, NC, USA
}
  
\begin{document}

  \onecolumn
\maketitle

%%%%%%%%%%%%%%%%%%%%%%%%%%%%%%%%%%%%%%%%%%%%%%%%%%%%%%%%%
\begin{abstract}
    We study performance-driven environment abstraction for decision-making in large Markov decision processes.
    Rather than preserving geometric or topological structure, we seek abstractions that directly optimize decision quality.
    We model abstraction as a controlled approximation obtained by aggregating the state space and enforcing a shared action distribution within each aggregated state.
    For a fixed partition, we establish a performance guarantee that separates value-function approximation error from the loss introduced by action sharing.
    Guided by this analysis, we develop a multi-timescale reinforcement learning framework that jointly adapts the policy and a tree-structured environment abstraction.
    The resulting algorithm refines and coarsens regions of the state space based on Q-value discrepancies, balancing performance against abstraction size and complexity.
    Empirical results demonstrate substantial state compression, improved sample efficiency, and faster replanning compared to actor–critic baselines.
\end{abstract}

%%%%%%%%%%%%%%%%%%%%%%%%%%%%%%%%%%%%%%%%%%%%%%%%%%%%%%%%%
\section{Introduction}\label{sec:intro}
Humans and other intelligent species mitigate complexity by forming reduced representations of their environment, discarding irrelevant details while retaining task-relevant structure~\citep{eppe2022intelligent}.
A key mechanism is \emph{hierarchical representation}~\citep{hughes2024foundations}, where decisions are made at a coarse resolution and refined only when necessary.
By focusing reasoning at an appropriate level of abstraction, intelligent agents manage complex tasks while remaining adaptive to new tasks and evolving environments.
Consequently, the ability to automatically construct hierarchical representations online is widely regarded as a fundamental component of intelligence~\citep{simon1962architecture, botvinick2008hierarchical}.

This hierarchical reasoning capability is increasingly required in modern large-scale autonomy, where planning directly in the original state space becomes computationally infeasible and may fail to meet safety-critical reaction times.
Existing approaches often compress the environment using information-theoretic~\citep{larsson2020q,ravichandran2022hierarchical} or structural heuristics~\citep{machado2017eigenoption, dean1995decomposition, guan2022hierarchical}.
However, the appropriate abstraction is inherently task- and performance-dependent, as different tasks and performance requirements induce different compressed representations.

This paper studies \emph{performance-driven state abstraction}.
Rather than preserving geometry or topology, we seek abstractions that directly optimize decision quality.
We treat environment abstraction as a controlled state space aggregation, and we adaptively refine and coarsen the state space only when doing so improves agent's performance.
We derive performance guarantees identifying two sources of suboptimality:
(i) value-function approximation error and
(ii)~loss induced by enforcing a shared action distribution within each aggregated state.
Based on this analysis, we design a multi-timescale reinforcement learning scheme in which the policy converges under a slowly varying environment abstraction which evolves through a Q-value–guided refinement and aggregation mechanism.
The resulting algorithm continuously restructures the state space aggregation to maintain a compact representation while improving decision performance.
Empirically, the learned abstraction improves performance and transfers across related tasks.

\vspace{-0.1in}
\paragraph{Contributions.}
The contribution of this work is threefold:
(i) a performance bound for decision-making under an environment abstraction, which separates value approximation error from action-sharing error within aggregated states,
(ii)~a Q-value–based criterion providing principled refinement and aggregation rules for tree-based abstractions, and
(iii) a multi-timescale learning algorithm that jointly adapts the policy and the abstraction.

\subsection{Related Work}

\vspace{-0.1in}

A central line of work in hierarchical decision-making focuses on \emph{temporal abstraction}. 
The option framework~\citep{sutton1999between} and feudal hierarchies~\citep{dayan1992feudal} introduce multi-level policies in which high-level controllers select subgoals executed by lower-level policies until termination. 
Subsequent extensions~\citep{mcgovern2001automatic, csimcsek2005identifying, mahadevan2007proto} improve exploration and sample efficiency but operate over the original state space rather than compressing it.

In contrast, \emph{state space abstraction} methods aim to merge states into superstates and reason directly at the abstract level. 
Proto-value functions~\citep{mahadevan2007proto, machado2017eigenoption}, information-bottleneck-based tree searches~\citep{larsson2020q, larsson2023information} and heuristic-based methods~\citep{ravichandran2022hierarchical, hughes2024foundations} construct hierarchical representations that preserve topological or informational structure. 
However, these abstractions are typically domain-specific and not explicitly optimized for downstream task performance.

Closer to our work are \emph{performance-driven abstractions}, where partitions are adapted to improve value rather than preserve geometry. 
Baras and Borkar~\citep{baras2000learning} proposed a quantizer-based hierarchical actor–critic refined via multi-timescale stochastic approximation, later extended to continuous domains with value-based distances and entropy regularization~\citep{mavridis2021maximum}. 
While effective, these approaches do not explicitly analyze the performance loss induced by enforcing a shared action distribution within each aggregated state. 
We provide theoretical guarantees that separate value-function approximation error from action-sharing error, yielding a Q-value–guided aggregation criterion.

Tree-structured abstraction methods such as U-Tree~\citep{mccallum1996reinforcement, jonsson2000automated} and  conditional abstraction trees~\citep{dadvar2023conditional} adaptively split states using TD-error statistics. 
These approaches rely on iterative off-policy procedures, and support only refinement but not aggregation. 
In contrast, we jointly adapt the abstraction and policy via a unified multi-timescale scheme, justify Q-values as the structural criterion, and introduce a one-step lookahead/backward mechanism enabling principled refinement and collapse operations.

Overall, prior work either relies on fixed abstractions, adapts partitions heuristically without performance guarantees, or focuses on temporal rather than spatial abstraction.
Our framework addresses this gap by explicitly characterizing the performance loss and using it to guide principled refinement and aggregation within a unified learning scheme.

\vspace{-0.1in}
%%%%%%%%%%%%%%%%%%%%%%%%%%%%%%%%%%%%%%%%%%%%%%%%%%%%%%%%%
\section{Problem Formulation}
\vspace{-0.1in}
We consider a discounted Markov decision process (MDP)
$\langle \mathcal{S}, \mathcal{A}, R, P, \beta \rangle$,
where $\mathcal{S}$ and $\mathcal{A}$ are finite state and action spaces,
$P(s' \vert s,a)$ is the transition kernel,
$R(s)$ is the reward function, and $\beta \in [0,1)$ is the discount factor.
A stationary policy $\pi(a \vert s)$ induces the value function
\begin{equation}
    V^\pi(s)
    =
    \mathbb{E}_\pi\!\left[\sum_{t=0}^{\infty}\beta^t R(S_t)~\vert~ S_0=s\right].
\end{equation}
We aim to find a policy maximizing the performance $J(\pi;\mu_0)\!=\!\sum_{s}\mu_0(s)V^\pi(s)$,
under the initial state distribution $\mu_0$.

One way to construct such an optimal policy is through the optimal value function $V^*$, which is the unique fixed point of the Bellman operator
\begin{equation}
    \big(\bellop{\mathcal{S}}V^*\big)(s)
    \!=\!
    \max_{a\in\mathcal{A}}
    \Big[R(s)+\beta\sum_{s'\in\mathcal{S}} P(s' \vert s,a)V^*(s')\Big].
    \label{eq:bellman_full}
\end{equation}
However, evaluating~\eqref{eq:bellman_full} becomes computationally prohibitive when $|\mathcal{S}|$ is large, motivating reduced representations of the state space.

\paragraph{State aggregation.}
We approximate the decision problem using a \emph{state aggregation}
$\Gamma=\{\gamma_1,\dots,\gamma_m\}$,
where $\cup_{k=1}^m\gamma_k=\mathcal{S}$ and $\gamma_i\cap\gamma_j=\emptyset$ for $i\ne j$.
Each $\gamma_k\subseteq\mathcal{S}$ is called a \emph{superstate}, and the aggregation map $\phi_\Gamma:\mathcal{S}\to\Gamma$ assigns each state to its superstate.
Conceptually, $\Gamma$ defines an abstraction of the environment and serves as a reduced state representation for decision making.

Let $\pi_\Gamma: \Gamma \times\mathcal{A} \to [0,1]$ denote an aggregated policy.
Under this formulation, all states within the same superstate share the same action distribution,
which we refer to as the \emph{same-action-distribution (SAD)} constraint.
An aggregated policy always induces a policy on $\mathcal{S}$ via
\[
    \pi(a\vert s)=\pi_\Gamma(a\vert\phi_\Gamma(s)).
\]

For aggregated policies, decision-making is performed over $\Gamma$ instead of $\mathcal{S}$. 
This reduces the policy search space and allows executing policies  with ``minimum attention"~\citep{brockett1997minimum} where the state representation is ``coarse". 
Figure~\ref{fig:sad-illustration} illustrates the effect of the SAD constraint.

\begin{figure}[t]
    \centering
    \includegraphics[width=0.4\linewidth]{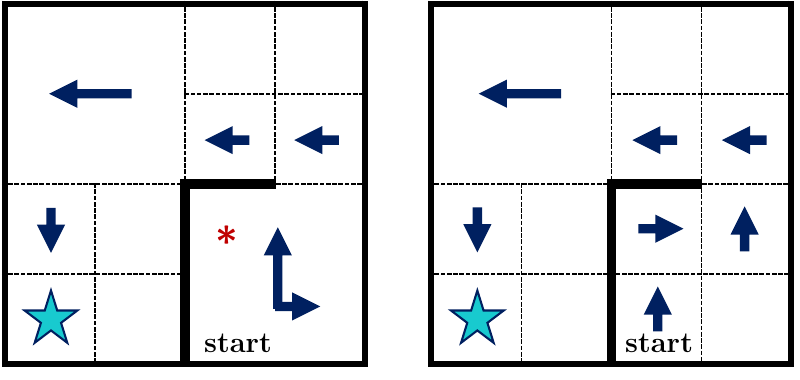}
    \vspace{-0.1in}
    \caption{
    Effect of the SAD constraint in a grid-world.
    \textbf{Left}: Coarse aggregation around the bottom-right forces action randomization, causing the agent to potentially stall at the red star.
    \textbf{Right}: Refining aggregation removes the need for randomization, enabling faster navigation.}
    \label{fig:sad-illustration}
\end{figure}

\paragraph{Abstraction--policy optimization.}
The above formulation leads to the following optimization problem, where we jointly select a state aggregation and aggregated policy, balancing performance and representation complexity:
\begin{equation}
    \max_{\pi_\Gamma,\,\Gamma}
    \; J(\pi_\Gamma;\mu_0) - \lambda |\Gamma|,
    \label{eqn:lagrangian-max}
\end{equation}
where $\lambda>0$ penalizes the number of superstates.

The above problem is inherently coupled.
The SAD constraint introduces approximation error whose magnitude depends on the chosen aggregation.
Conversely, evaluating the performance of an aggregated policy requires sufficient resolution of the value function in regions where actions differ.
This intrinsic coupling between $\Gamma$ and $\pi_\Gamma$ makes it challenging to solve~\eqref{eqn:lagrangian-max} directly.

%%%%%%%%%%%%%%%%%%%%%%%%%%%%%%%%%%%%%%%%%%%%%%%%%%%%%%%%%
\section{Learning Policies under Fixed Aggregations}
We introduce the aggregated Bellman operator and learn an optimal policy under a fixed aggregation $\Gamma$. 
We then quantify the performance loss induced by the SAD constraint.

Given a state distribution $\mu$, define the aggregated reward and transition kernel
\begin{align*}
    R_{\Gamma, \mu}(\gamma) &= \sum_{s\in\gamma} \mu_\gamma(s) R(s), \\
    P_{\Gamma, \mu}(\gamma' \vert \gamma, a)
    &=\sum_{s \in \gamma }\sum_{s' \in \gamma'}\mu_\gamma(s)P(s'\vert s,a),
\end{align*}
where $\mu_\gamma(s) = \mu(s)\big/\sum_{s\in\gamma}\mu(s)$ is the normalized distribution within $\gamma$.
Then, the performance of an aggregated policy $\pi_\Gamma$ evaluated over $\Gamma$ satisfies 
\begin{equation*}
    \aggV_{\Gamma, \mu
    }^{\pi_\Gamma}(\gamma) \!=\! R_{\Gamma, \mu}(\gamma) \!+\! \beta \!\sum_{a\in \A} \!\pi_\Gamma (a|\gamma)\!\sum_{\gamma' \in \Gamma} \!P_{\Gamma, \mu}(\gamma' |\gamma, a) \aggV_{\Gamma, \mu
    }^{\pi_\Gamma}(\gamma').
\end{equation*}

The optimal aggregated value function is the fixed point of
\begin{equation}
    \label{eqn:agg-fixed-point}
    (\bellop{\Gamma,\mu} \bar V)(\gamma)
    \!=\!
    \max_{a\in \A}
    \Big[
    R_{\Gamma,\mu}(\gamma)
    + \!\beta\! \sum_{\gamma' \in \Gamma}\!
    P_{\Gamma,\mu}(\gamma'\vert\gamma,a)
    \bar V(\gamma')
    \Big],
\end{equation}
and we refer to $\bellop{\Gamma,\mu}$ as the \emph{aggregated Bellman operator}, whose unique fixed point is denoted as $\aggV_{\Gamma, \mu}^*$.

\subsection{Aggregated Actor--Critic.}
To compute the optimal aggregated policy under a fixed partition $\Gamma$,
we employ a two-timescale actor--critic (AC) algorithm~\citep{borkar1997stochastic}.
The critic estimates the aggregated value function,
while the actor updates a softmax policy parameterization.

Let $\Gamma_t = \phi_\Gamma(S_t)$ denote the superstate containing $S_t$ at time~$t$.
The aggregated critic and actor updates are given by
\begin{align}
&\aggV_{t+1}(\Gamma_t)
=
(1-\xi_t)\aggV_t(\Gamma_t)
+ \xi_t \big(R_t + \beta \aggV_t(\Gamma_{t+1})\big), 
\label{eqn:agg-ac}
\\
&\aggQ_{t+1}\!(\Gamma_t,A_t\!)
\!=\!
\proj{q}\!\Big(\!
\aggQ_t(\Gamma_t,A_t\!)
\!+\! \zeta_t
\big(
\!R_t \!+\! \!\beta \aggV_t(\Gamma_{t+1}\!)
\!-\! \aggV_t(\Gamma_t)
\big)\!
\Big), \nonumber
\end{align}
where $\proj{q}$ projects onto $[-q,q]$ to ensure boundedness.

The action $A_t$ is sampled from the Boltzmann policy
\begin{equation}
\label{eqn:agg-bolzmann}
\psi_{\aggQ_t}(\Gamma_t,a)
=
\frac{e^{\aggQ_t(\Gamma_t,a)}}
{\sum_{a'} e^{\aggQ_t(\Gamma_t,a')}}.
\end{equation}

To ensure convergence, the critic operates on a faster timescale than the actor, i.e.,
$\lim_{t\to\infty}\zeta_t/\xi_t = 0$.
This on-policy scheme implicitly estimates
the state distribution $\mu$ used in~\eqref{eqn:agg-fixed-point} induced by $\pi_\Gamma$ through trajectory samples.

Under the following standard conditions on step sizes, convergence follows from classical two-timescale theory.

\begin{assumption}
\label{assump:stepsize}
    The learning rates $\{\xi_t\}$ and $\{\zeta_t\}$ satisfy
    \begin{align*}
    &\sum_{t=0}^{\infty} \xi_t = \sum_{t=0}^{\infty} \zeta_t = \infty, \quad
    \sum_{t=0}^{\infty} \xi_t^2 < \infty, \quad \sum_{t=0}^{\infty} \zeta_t^2 < \infty,\\
    &\lim_{t\to\infty} \xi_{t+1}/{\xi_t} = 1, \qquad
    \lim_{t\to\infty} {\zeta_t}/{\xi_t} = 0.
    \end{align*}
\end{assumption}

\begin{lemma}[\citep{baras2000learning}]
\label{lmm:agg-ac-convergence}
Under Assumption~\ref{assump:stepsize}, the aggregated actor--critic converges almost surely to $\aggQ^*$.
The converged actor induces a policy $\pi_\Gamma^*$ and stationary distribution $\mu^*$ satisfying
\[
\aggV_{\Gamma,\mu^*}^*(\gamma)
\le
\aggV_{\Gamma,\mu^*}^{\pi_\Gamma^*}(\gamma)
\le
\aggV_{\Gamma,\mu^*}^*(\gamma) + \epsilon(q),
\]
where $\epsilon(q)\to 0$ as $q\to\infty$.
\end{lemma}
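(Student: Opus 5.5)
We will follow the standard two-timescale stochastic approximation route of \citep{borkar1997stochastic}, specialized to the aggregated chain as in \citep{baras2000learning}. The argument proceeds in three layers: the fast critic, the slow actor, and the interpretation of the limit through the fixed point of \eqref{eqn:agg-fixed-point}.

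\textbf{Fast timescale (critic).} Because $\zeta_t/\xi_t\to 0$, the actor iterate $\aggQ_t$ is quasi-static from the critic's viewpoint, so we may freeze $\aggQ$ and hence the Boltzmann policy $\psi_{\aggQ}$ in \eqref{eqn:agg-bolzmann}. We will assume the underlying chain is irreducible and aperiodic under every policy of positive support; softmax policies have full support, so this covers all $\psi_{\aggQ}$. Under this assumption, $\psi_{\aggQ}$ induces a unique stationary distribution $\mu_{\aggQ}$ on $\S$. Averaging the critic update in \eqref{eqn:agg-ac} over $\mu_{\aggQ}$ yields the ODE $\dot{\aggV}=D_{\mu}\big(R_{\Gamma,\mu}+\beta P^{\psi}_{\Gamma,\mu}\aggV-\aggV\big)$, where $D_\mu$ is diagonal with positive entries (the superstate masses). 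Since $\beta P^{\psi}_{\Gamma,\mu}$ is a $\beta$-contraction in sup-norm, this ODE has a globally asymptotically stable equilibrium $\aggV^{\psi}_{\Gamma,\mu}$. Boundedness of the iterates follows from the standard scaling-limit (Borkar--Meyn) argument, so the critic tracks $\aggV^{\psi_{\aggQ_t}}_{\Gamma,\mu_{\aggQ_t}}$ almost surely. The Markov noise is handled by the Poisson-equation decomposition, which is valid because the chain is uniformly ergodic over the compact set of projected $\aggQ$.

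\textbf{Slow timescale (actor).} With the critic equilibrated, we plug $\aggV^{\psi}$ into the actor update. The mean increment for $(\gamma,a)$ becomes $\mu(\gamma)\psi(a|\gamma)\big(R_{\Gamma,\mu}(\gamma)+\beta\sum_{\gamma'}P_{\Gamma,\mu}(\gamma'|\gamma,a)\aggV^\psi(\gamma')-\aggV^\psi(\gamma)\big)$, which is an advantage-type term. The result is a projected ODE $\dot{\aggQ}=\hat\Pi\big(h(\aggQ)\big)$ on $[-q,q]^{|\Gamma||\A|}$, where $\hat\Pi$ denotes the boundary-projection operator of Kushner--Clark. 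We will show that the limit set consists of points where, for each $\gamma$, either the advantage of $a$ is zero or $\aggQ(\gamma,a)$ sits at the boundary $\pm q$ with the advantage pointing outward. This is the \textbf{main obstacle}, since it requires a Lyapunov-type argument, and the dependence $\mu=\mu_{\aggQ}$ makes the aggregated kernel itself policy-dependent. We would first try the approach of \citep{baras2000learning}: treat $\mu$ as frozen at the limiting distribution $\mu^*$, and use the performance-difference identity for the aggregated chain under $\mu^*$ to show that $\sum_\gamma \mu^*(\gamma)\aggV^{\psi}(\gamma)$ is nondecreasing along trajectories.

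\textbf{Interpreting the limit.} At a limit point $\aggQ^*$, with policy $\pi^*_\Gamma=\psi_{\aggQ^*}$ and stationary distribution $\mu^*$, each action either has zero advantage or is at the upper or lower boundary. Actions at the upper boundary carry the mass of the policy, while the probability on actions pinned at $-q$ is at most $|\A|e^{-2q}$, up to normalization. Hence $\pi^*_\Gamma$ is greedy with respect to $\aggV^{\pi^*_\Gamma}_{\Gamma,\mu^*}$ up to a residual policy error $\delta(q)\to 0$. Applying the contraction $\bellop{\Gamma,\mu^*}$ then gives
\[
\big\|\aggV^{\pi^*_\Gamma}_{\Gamma,\mu^*}-\aggV^*_{\Gamma,\mu^*}\big\|_\infty \le \frac{\beta\,\delta(q)\,2R_{\max}}{(1-\beta)^2}=:\epsilon(q).
\]
This establishes the two-sided sandwich stated in the lemma, with $\epsilon(q)\to 0$ as $q\to\infty$.
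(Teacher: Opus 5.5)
The paper gives no proof of its own here; it cites Baras and Borkar. Your proposal therefore has to stand alone. The fast-timescale part is fine under your added irreducibility assumption, but two steps do not go through.

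\textbf{The slow-timescale Lyapunov step is circular.} $\mu^*$ is only defined once the actor is known to converge, and $W(\aggQ)=\sum_\gamma\mu^*(\gamma)\aggV^{\psi_{\aggQ}}_{\Gamma,\mu^*}(\gamma)$ is not monotone along the flow you derived. Write $g_\mu(\gamma,a)$ for the advantage built from $R_{\Gamma,\mu}$, $P_{\Gamma,\mu}$ and $\aggV^{\psi}_{\Gamma,\mu}$. The policy-gradient identity for the frozen aggregated MDP gives $\partial W/\partial\aggQ(\gamma,a)=d^*(\gamma)\psi(a\vert\gamma)g_{\mu^*}(\gamma,a)$, where $d^*>0$ is a discounted occupancy. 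The averaged actor drift, however, is $\mu_{\aggQ}(\gamma)\psi(a\vert\gamma)g_{\mu_{\aggQ}}(\gamma,a)$. So in the interior $\dot W=\sum_{\gamma,a}d^*(\gamma)\mu_{\aggQ}(\gamma)\psi(a\vert\gamma)^2g_{\mu^*}(\gamma,a)g_{\mu_{\aggQ}}(\gamma,a)$, which has no sign when $\mu_{\aggQ}\neq\mu^*$. Differentiating the unfrozen $\aggV^{\psi}_{\Gamma,\mu_{\aggQ}}$ instead adds a $\dot\mu$ term that also has no sign.

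Since $R_{\Gamma,\mu}$ and $P_{\Gamma,\mu}$ depend on $\mu$ only through the within-superstate weights $\mu_\gamma$, your argument does close if those weights are policy-independent. Then the aggregated MDP is fixed and each term is $d^*\mu_{\aggQ}\psi^2g^2\ge 0$. In the on-policy setting of the lemma, though, you need either such an assumption or a genuinely different argument; policy-dependent disaggregation weights are the regime in which approximate policy iteration with aggregation can oscillate. Note also that a monotone value only gives convergence to the equilibrium set (LaSalle), not to a single point $\aggQ^*$.

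\textbf{The last step does not yield the displayed inequalities.} A sup-norm bound gives $\aggV^*_{\Gamma,\mu^*}-\epsilon(q)\le\aggV^{\pi^*_\Gamma}_{\Gamma,\mu^*}\le\aggV^*_{\Gamma,\mu^*}+\epsilon(q)$, not $\aggV^*_{\Gamma,\mu^*}\le\aggV^{\pi^*_\Gamma}_{\Gamma,\mu^*}$. In this paper's reward-maximization convention, every aggregated policy satisfies $\aggV^{\pi_\Gamma}_{\Gamma,\mu^*}\le\aggV^*_{\Gamma,\mu^*}$, because its evaluation operator is dominated by the max in $\bellop{\Gamma,\mu^*}$. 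The stated left inequality would therefore force exact optimality. That fails whenever some superstate has a strictly suboptimal action, since a Boltzmann policy with $\vert\aggQ\vert\le q$ gives every action probability at least $e^{-2q}/\vert\A\vert$. The display has the orientation appropriate to cost minimization. What is provable is $\aggV^*_{\Gamma,\mu^*}-\epsilon(q)\le\aggV^{\pi^*_\Gamma}_{\Gamma,\mu^*}\le\aggV^*_{\Gamma,\mu^*}$, and you should say so rather than claim the sandwich as written.

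Your equilibrium analysis also needs repair. Interior actions with zero advantage can carry substantial mass, so ``actions at $+q$ carry the mass'' is false as stated. Use $\sum_a\psi(a\vert\gamma)g_{\mu^*}(\gamma,a)=0$ instead. If any action has positive advantage, it sits at $+q$ with probability at least $1/\vert\A\vert$, while negative-advantage actions sit at $-q$ with probability at most $e^{-2q}$. Hence every advantage is at most $\vert\A\vert^2e^{-2q}\cdot 2R_{\max}/(1-\beta)$; if none is positive, all are zero. This bounds $\norm{\bellop{\Gamma,\mu^*}\aggV^{\pi^*_\Gamma}_{\Gamma,\mu^*}-\aggV^{\pi^*_\Gamma}_{\Gamma,\mu^*}}_\infty$, and dividing by $1-\beta$ gives $\epsilon(q)=O(e^{-2q})$.
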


\subsection{Performance Loss under fixed aggregations}
We now quantify the performance loss due to state aggregation by comparing
the optimal value function $V^*\in \mathbb{R}^{|\S|}$ with its aggregated counterpart
$\aggV^*_{\Gamma,\mu^*} \in \mathbb{R}^{|\Gamma|}$.
Because the two value functions lie on different domains, we introduce the value projection and its $\mu$-weighted pseudo-inverse
\begin{align*}
    &(\Psi_\Gamma \aggV{})(s) = \aggV{}(\phi(s)), \\
    &(\Psi^+_{\Gamma,\mu^*}V)(\gamma) = \sum_{s\in \gamma} \mu^*_\gamma(s) V(s).
\end{align*}

We then have the the following theorem, which quantifies the discrepancy between $\aggV_{\Gamma, \mu^*}^*$ and $V^*$, and provides the first insight toward an adaptive state aggregation algorithm.

\begin{restatable}{theorem}{thmaggopt}
    \label{thm:hier-sub-opt}
    Consider an MDP with state space $\S$ and a fixed aggregation $\Gamma$. 
    Let $V^* \in \mathbb{R}^{|\S|}$ denote the optimal value function of the MDP, and let $\aggV^*_{\Gamma, \mu^*} \in \mathbb{R}^{|\Gamma|}$ be the fixed point of~\eqref{eqn:agg-fixed-point}. 
    Then, we have
    \begin{equation}
        \label{eqn:agg-bound}
        \norm{\Psi_\Gamma \aggV^*_{\Gamma, \mu^*} - V^*}_{\infty} 
        \;\leq\; \frac{2 \epsilon_\Gamma + \delta_\Gamma}{1-\beta},
    \end{equation}
    where we have
    $\epsilon_\Gamma \!=\! \min_{\bar V \in \mathbb{R}^{|\Gamma|}} 
    \Vert \Psi_\Gamma \bar V - V^*\Vert_\infty$,
    and
    $\delta_\Gamma = \Vert \Psi^+_{\Gamma,\mu^*} \circ \bellop{\S} \circ \Psi_\Gamma \bar V_{\opt} 
    - \bellop{\Gamma,\mu^*} \bar V_{\opt}\Vert_\infty$
    with $\bar V_{\opt} \in \argmin_{\bar V \in \mathbb{R}^{|\Gamma|}} \norm{\Psi_\Gamma \bar V - V^*}_\infty$.
    % \dm{
    % I think we actually mean {\small 
    % \[ 
    % \delta_\Gamma = \max_{\bar V_{\opt} \in \argmin_{\bar V \in \mathbb{R}^{|\Gamma|}} \norm{\Psi_\Gamma \bar V - V^*}_\infty} \Vert \Psi^+_{\Gamma,\mu^*} \circ \bellop{\S} \circ \Psi_\Gamma \bar V_{\opt} 
    % - \bellop{\Gamma,\mu^*} \bar V_{\opt}\Vert_\infty
    % \]
    % }
    % } 
\end{restatable}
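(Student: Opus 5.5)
Write $\bar V^* := \aggV^*_{\Gamma,\mu^*}$ and $\bar V_{\opt}$ for the minimizer in the definition of $\epsilon_\Gamma$. The plan is a standard fixed-point perturbation argument that compares everything to $\bar V_{\opt}$. First split
\[
\norm{\Psi_\Gamma \bar V^* - V^*}_\infty \le \norm{\Psi_\Gamma \bar V^* - \Psi_\Gamma \bar V_{\opt}}_\infty + \norm{\Psi_\Gamma \bar V_{\opt} - V^*}_\infty .
\]
The second term equals $\epsilon_\Gamma$ by definition. Since $\Psi_\Gamma$ copies superstate values to states and every superstate is nonempty, $\norm{\Psi_\Gamma \bar U}_\infty = \norm{\bar U}_\infty$. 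So the first term equals $\norm{\bar V^* - \bar V_{\opt}}_\infty$.

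Next use contraction of the aggregated operator. Each $P_{\Gamma,\mu^*}(\cdot\vert\gamma,a)$ is a probability distribution on $\Gamma$, so $\bellop{\Gamma,\mu^*}$ is a $\beta$-contraction in $\norm{\cdot}_\infty$ (max of affine maps with stochastic weights). Because $\bar V^* = \bellop{\Gamma,\mu^*}\bar V^*$,
\[
\norm{\bar V^* - \bar V_{\opt}}_\infty \le \beta \norm{\bar V^* - \bar V_{\opt}}_\infty + \norm{\bellop{\Gamma,\mu^*}\bar V_{\opt} - \bar V_{\opt}}_\infty ,
\]
which gives $\norm{\bar V^* - \bar V_{\opt}}_\infty \le \norm{\bellop{\Gamma,\mu^*}\bar V_{\opt} - \bar V_{\opt}}_\infty/(1-\beta)$.

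The remaining task, and the main obstacle, is to bound the Bellman residual $\norm{\bellop{\Gamma,\mu^*}\bar V_{\opt} - \bar V_{\opt}}_\infty$ by $2\epsilon_\Gamma + \delta_\Gamma$ up to constants. Insert $\Psi^+_{\Gamma,\mu^*}\bellop{\S}\Psi_\Gamma \bar V_{\opt}$ and use the triangle inequality. The first piece is exactly $\delta_\Gamma$. For the second piece, note that $\Psi^+_{\Gamma,\mu^*}$ is a convex average, so it is nonexpansive in sup norm, and $\Psi^+_{\Gamma,\mu^*}\Psi_\Gamma = I$. Then, using $V^* = \bellop{\S}V^*$,
\[
\norm{\Psi^+_{\Gamma,\mu^*}\bellop{\S}\Psi_\Gamma \bar V_{\opt} - \bar V_{\opt}}_\infty \le \norm{\bellop{\S}\Psi_\Gamma\bar V_{\opt} - V^*}_\infty + \norm{V^* - \Psi_\Gamma \bar V_{\opt}}_\infty \le \beta\epsilon_\Gamma + \epsilon_\Gamma .
\]
So the residual is at most $(1+\beta)\epsilon_\Gamma + \delta_\Gamma$.

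Combining the pieces gives
\[
\norm{\Psi_\Gamma \bar V^* - V^*}_\infty \le \frac{(1+\beta)\epsilon_\Gamma + \delta_\Gamma}{1-\beta} + \epsilon_\Gamma = \frac{2\epsilon_\Gamma + \delta_\Gamma}{1-\beta},
\]
since $(1+\beta)\epsilon_\Gamma + (1-\beta)\epsilon_\Gamma = 2\epsilon_\Gamma$. This matches the stated bound exactly.

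The only delicate points to check are the two operator facts. One is that $\Psi^+\Psi = I$, which holds because the weights $\mu^*_\gamma$ sum to one on each $\gamma$. This requires $\mu^*$ to put positive mass on every superstate, which I would assume (or otherwise define $\mu^*_\gamma$ arbitrarily on null superstates). The other is that both $\bellop{\S}$ and $\bellop{\Gamma,\mu^*}$ are $\beta$-contractions.
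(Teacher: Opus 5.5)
Your proposal is correct and follows the paper's proof essentially step for step. It uses the same split through $\bar V_{\opt}$, the same residual bound $(1+\beta)\epsilon_\Gamma+\delta_\Gamma$ obtained by inserting $\Psi^+_{\Gamma,\mu^*}\bellop{\S}\Psi_\Gamma\bar V_{\opt}$, and the same $\beta$-contraction rearrangement for $\bellop{\Gamma,\mu^*}$ (the paper merely bounds the residual before invoking contraction, and it also uses implicitly the facts you flag: $\Psi^+_{\Gamma,\mu^*}\Psi_\Gamma=I$ requires $\mu^*$ to charge every superstate, and $P_{\Gamma,\mu^*}$ is stochastic).
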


\begin{proof}
    The detailed proof is delayed to Appendix~\ref{appdx-sec:thm-sub-opt-proof}.
\end{proof}

\begin{remark}
\label{rmk:agg-epsilon}
The term $\epsilon_\Gamma$ measures the approximation error introduced by representing the optimal value function $V^*$ using piecewise-constant values over $\Gamma$.
% Equivalently, $V^*$ is approximated in the span of indicator basis vectors $e_\gamma \in \mathbb{R}^{|\S|}$, where $e_\gamma(s)=1$ if $s\in\gamma$ and $0$ otherwise.
Thus, $\epsilon_\Gamma$ captures the variation of $V^*$ within each superstate.
In particular, 
\[
\epsilon_\Gamma=\tfrac12 \max_{\gamma\in\Gamma}\Big(\max_{s\in\gamma}V^*(s)-\min_{s\in\gamma}V^*(s)\Big),
\]
and $\epsilon_\Gamma=0$ when $\Gamma=\S$.
\end{remark}

\begin{remark}
    \label{rmk:agg-delta}
    The $\delta_\Gamma$ term quantifies the performance loss due to the SAD constraint. 
    Specifically,
    \begin{align*}
        &\Big(\Psi^{+}_{\Gamma,\mu^*}\circ \bellop{\S} \circ \Psi_\Gamma \bar V_{\opt}\Big)(\gamma) = \sum_{s\in \gamma} \mu_\gamma(s) 
        \Big(
            \max_{a_s \in \A}\{
                R(s)\!+\! \beta\! \sum_{s' \in \S}\! P(s'\vert s,a_s) 
                \bar V_{{\opt}}(\phi(s'))
            \}
        \Big),
    \end{align*}
    which allows each $s \in \gamma$ to select its maximizing action independently with the best approximate value function $\aggV_{\opt}$.
    In contrast, the operator $\bellop{\Gamma,\mu^*}$ in~\eqref{eqn:agg-fixed-point} performs a single maximization at the superstate level, with all states in $\gamma$ sharing the same maximizing action.
    In the special case $\Gamma = \S$, the two operators coincide and $\delta_\Gamma = 0$.
\end{remark}

While Theorem~\ref{thm:hier-sub-opt} characterizes suboptimality, the quantities
$\epsilon_\Gamma$ and $\delta_\Gamma$ depend on the unknown optimal value $V^*$.
We therefore derive a computable upper bound based on Bellman residual mismatch.

\begin{restatable}{corollary}{coraggopt}
    \label{cor:hier-sub-opt}
    Consider an MDP and a fixed aggregation $\Gamma$, we have that
    \begin{equation}
        \label{eqn:cor-agg-bound}
        \Vert\Psi_\Gamma \aggV^*_{\Gamma, \mu} \!-\! V^*\Vert_{\infty} 
        \!\leq\! \frac{\Vert \Psi_\Gamma \bellop{\Gamma,\mu^*}\aggV^*_{\Gamma, \mu^*} \!-\! \bellop{\S } \Psi_\Gamma \aggV^*_{\Gamma, \mu^*}\Vert_\infty}{1-\beta}.
    \end{equation}
\end{restatable}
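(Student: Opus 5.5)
The argument is a standard a posteriori Bellman-residual bound. It uses only that $V^*$ is the fixed point of the $\beta$-contraction $\bellop{\S}$ in $\|\cdot\|_\infty$, and that $\aggV^*_{\Gamma,\mu^*}$ is the fixed point of $\bellop{\Gamma,\mu^*}$. Write $\bar V^* := \aggV^*_{\Gamma,\mu^*}$ and $W := \Psi_\Gamma \bar V^* \in \mathbb{R}^{|\S|}$. The quantity $\mu$ on the left of~\eqref{eqn:cor-agg-bound} is the same stationary distribution $\mu^*$ appearing on the right, as in Theorem~\ref{thm:hier-sub-opt}. Because $\bar V^* = \bellop{\Gamma,\mu^*}\bar V^*$, applying $\Psi_\Gamma$ to both sides gives
\[
W = \Psi_\Gamma \bellop{\Gamma,\mu^*} \bar V^* .
\]
This identity is the key step: it rewrites $W$ as the lifted aggregated Bellman image, which is exactly the first term in the numerator.

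Next I will use the triangle inequality together with $V^* = \bellop{\S} V^*$:
\[
\|W - V^*\|_\infty \le \|\Psi_\Gamma \bellop{\Gamma,\mu^*}\bar V^* - \bellop{\S} W\|_\infty + \|\bellop{\S} W - \bellop{\S} V^*\|_\infty .
\]
The first term is precisely the residual in the numerator of~\eqref{eqn:cor-agg-bound}. The second term is at most $\beta \|W - V^*\|_\infty$, because $\bellop{\S}$ in~\eqref{eq:bellman_full} is a $\beta$-contraction in the sup norm (a max of affine maps, each with a stochastic transition matrix scaled by $\beta$). Rearranging gives $(1-\beta)\|W - V^*\|_\infty \le$ the residual, which is the claim.

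I expect essentially no obstacle. The only points needing care are to confirm that the contraction property is invoked for the unaggregated operator $\bellop{\S}$, not for $\bellop{\Gamma,\mu^*}$, so no property of the aggregated operator beyond its fixed-point equation is needed; and to note the notational identification of $\mu$ with $\mu^*$. All quantities in the bound are computable from $\bar V^*$ alone, which is why this serves as the practical surrogate for Theorem~\ref{thm:hier-sub-opt}.
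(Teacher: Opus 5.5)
Your proposal is correct and matches the paper's proof step for step. The paper likewise writes $\Psi_\Gamma \aggV^*_{\Gamma,\mu^*}$ as $\Psi_\Gamma \bellop{\Gamma,\mu^*}\aggV^*_{\Gamma,\mu^*}$ via the fixed-point identity, inserts $\bellop{\S}\Psi_\Gamma\aggV^*_{\Gamma,\mu^*}$ using the triangle inequality, applies the $\beta$-contraction of $\bellop{\S}$, and rearranges; your reading of $\mu$ on the left-hand side as $\mu^*$ is the one the paper's proof implicitly uses.
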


\begin{proof}
    The proof is presented in Appendix~\ref{appdx-sec:cor-sub-opt-proof}.
\end{proof}

In the sequel, we denote
\begin{equation*}
        \bar \epsilon_\Gamma = \norm{\Psi_\Gamma \bellop{\Gamma,\mu}\aggV^*_{\Gamma, \mu} - \bellop{\S } \Psi_\Gamma \aggV^*_{\Gamma, \mu}}_\infty.
\end{equation*}

As discussed in Remarks~\ref{rmk:agg-epsilon} and~\ref{rmk:agg-delta},
$\Gamma=\S$ yields optimal performance but no reduction in state space complexity,
whereas coarser partitions reduce complexity at the cost of suboptimality.
This trade-off motivates a multi-timescale algorithm that adaptively refines or aggregates regions of the state space to reduce estimated performance loss.

\section{Tree-based Adaptive State Aggregation}
Thus far, our analysis assumed a fixed aggregation $\Gamma$.
We now introduce a tree-structured scheme that adapts the aggregation online
while the actor--critic operates on a faster timescale.

We represent the state space $\S$ using a hierarchical tree
$\tree=(\nodeset,\edgeset)$.
Each leaf $\gamma\in\leafNode{\tree}$ defines a superstate,
thereby inducing a state aggregation.
We use $\tree$ interchangeably with the partition it induces.
Let $\tree_\S$ denote the finest tree whose leaves are singletons
(i.e., $\leafNode{\tree_\S}=\S$).
Throughout, we use quadtrees for concreteness,
although the method extends to general hierarchical trees.
An example of a quadtree is presented in Figure~\ref{fig:quad-tree}.

\paragraph{Tree operations.}
For a node $\gamma$, let $\parent{\gamma}$ denote its parent
and $\children{\gamma}$ its children.
A leaf $\gamma$ is \emph{expandable} if $\gamma\notin\leafNode{\tree_\S}$.
For $A\subseteq\leafNode{\tree}$,
the expansion operator $\expand{\tree,A}$ replaces each $\gamma\in A$
with its children.
A node $\gamma$ is \emph{collapsible}
if all its children are leaves of $\tree$.
For $B$ a set of collapsible nodes,
the collapse operator $\collapse{\tree,B}$
merges children back into their parent.
For example, in Figure~\ref{fig:quad-tree} one may obtain $\tree_\S=\expand{\tree,\{\gamma_2,\gamma_3\}}$.

\begin{figure}[t]
    \centering
    \includegraphics[width=0.6\linewidth]{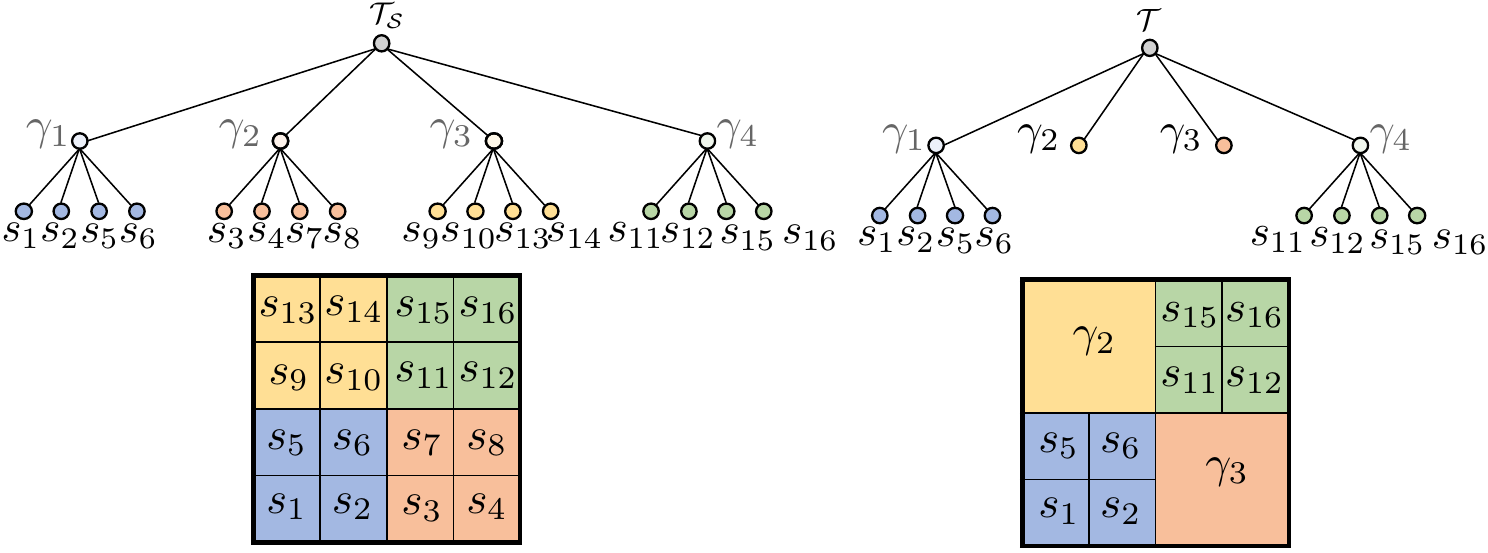}
    \caption{Tree representations of grid environments.}
    \label{fig:quad-tree}
    \vspace{-0.2 in}
\end{figure}

\subsection{Q-Estimation for Tree Updates}

The key research question in designing an adaptive state aggregation scheme is determining which nodes in tree $\tree$ should be expanded (refined) and which should be collapsed (coarsened). 
Our approach addresses this by maintaining Q-value estimates at the leaves and using them to guide dynamic refinement or coarsening of the partition.

Given a tree representation $\tree$ of the finite state space $\S$, we maintain a Q-value estimate at each leaf node, alongside the actor and critic, to evaluate policy performance locally. 
Suppose the process is currently in state $S_t$ with corresponding superstate $\Gamma_t = \phi_\tree(S_t) \in \leafNode{\tree}$. 
The Q-estimate at $\Gamma_t$ is updated as
\begin{align}
&\aggQ_{t+1}^\est(\Gamma_t,A_t)
=
(1-\eta_t)\aggQ_t^\est(\Gamma_t,A_t)
+ \eta_t \Big(
R_t 
+
\beta \big(
\mathds{1}_{S_{t+1}\in\Gamma_t}
\aggQ_t^\est(\Gamma_t,A_t)
 + 
\mathds{1}_{S_{t+1}\notin\Gamma_t}
\max_a \aggQ_t^\est(\Gamma_{t+1},a)
\big)
\Big),
\label{eqn:Q-est}
\end{align}
where $A_t$ is drawn according to~\eqref{eqn:agg-bolzmann} from the aggregated actor, and $\Gamma_{t+1}\in\leafNode{\tree}$ denotes the next superstate reached under $\tree$.
We update the estimated Q-values on a slower timescale than the actor to ensure that the partition remains approximately fixed (quasi-static) while the actor learns the aggregated policy under the current partition.
Specifically, we enforce that $\lim_{t\to\infty}\eta_t/\zeta_t=0$, where $\zeta_t$ is the step size for the actor update.

To assess whether a leaf node should be expanded, we maintain Q-estimates not only for the node itself but also for its prospective children.
The intuition is that the discrepancies between parent and child Q-values reflect the potential benefit of refining the representation in that region.
Formally, let $\Gamma_t \in \leafNode{\tree}$ be an expandable leaf, and define the locally expanded tree $\widetilde{\tree} = \expand{\tree,\Gamma_t}$.

Let $\widetilde{\Gamma}_t = \phi_{\widetilde{\tree}}(S_t)$ and $\widetilde{\Gamma}_{t+1} = \phi_{\widetilde{\tree}}(S_{t+1})$.
By construction, we always have $\widetilde{\Gamma}_t \in \children{\Gamma_t}$.
For the next time step, if $S_{t+1}\in\Gamma_t$, the agent remains within the expanded region, so $\widetilde{\Gamma}_{t+1}$ is either $\widetilde{\Gamma}_t$ as before or a sibling of $\widetilde{\Gamma}_t$.
If $S_{t+1}\notin \Gamma_t$, then the agent has left the expanded region, and $\widetilde{\Gamma}_{t+1}$ coincides with the superstate $\Gamma_{t+1}$ in the original process.
The Q-estimate for a child $\widetilde{\Gamma}_t \in \children{\Gamma_t}$ is then updated~as
\begin{align}
    \aggQ_{t+1}^\est (\widetilde{\Gamma}_t, A_t) 
    = (1-\eta_t)\,\aggQ_{t}^\est (\widetilde{\Gamma}_t, A_t) 
      + \eta_t \Big(R_t  +  \beta  \big(\mathds{1}_{S_{t+1}  \in \widetilde{\Gamma}_t} \aggQ_{t}^\est (\widetilde{\Gamma}_{t}, A_t)      
       &+  \mathds{1}_{S_{t+1} \in \Gamma_t  \setminus \widetilde{\Gamma}_t} \max_{a} \aggQ_{t}^\est (\widetilde{\Gamma}_{t+1}, a) 
       \label{eqn:Q-est-child-parent} \\
        & + \mathds{1}_{S_{t+1} \notin \Gamma_t}\max_a \aggQ_{t}^\est (\Gamma_{t+1}, a)\big)\Big), \nonumber
\end{align}
where the three indicators correspond to the three cases discussed above.  
An example of this update rule is shown in Figure~\ref{fig:q-est-learning-process}, which corresponds to the third case, where the agent leaves the parent $\Gamma_t$ after the transition. 

\begin{figure}[b]
    \centering
    \includegraphics[width=0.7\linewidth]{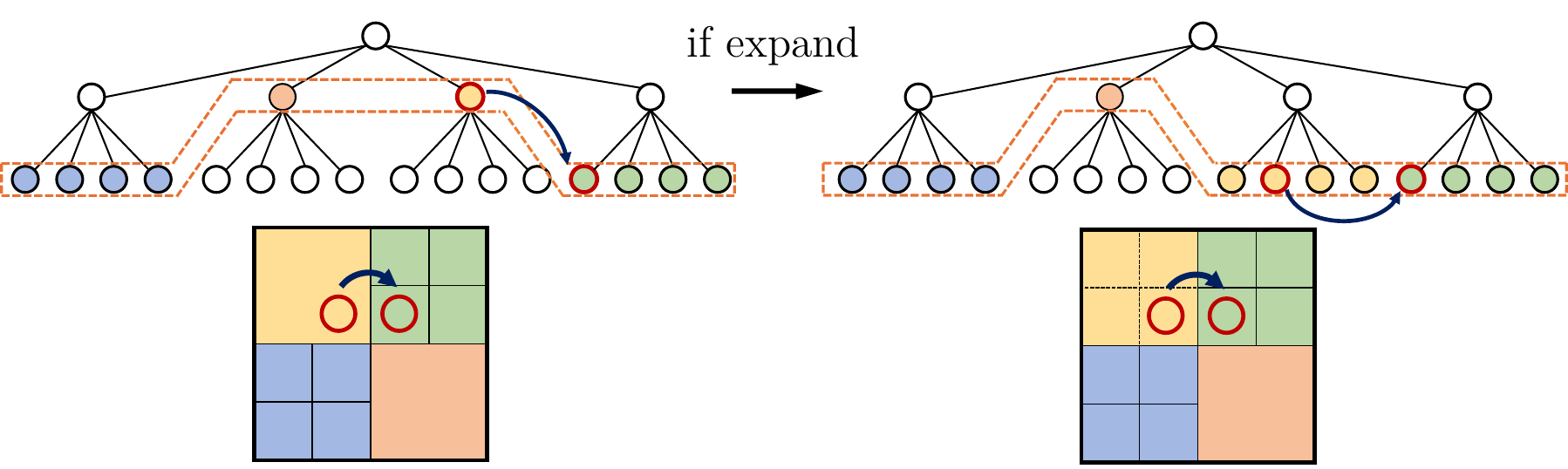}
    \caption{
    Illustration of the child Q-estimate update.
    Red circles mark the state transition and the corresponding superstates/nodes $\{\Gamma_t\}$ and $\{\widetilde{\Gamma}_t\}$ when the tree is expanded.
    }
    \label{fig:q-est-learning-process}
\end{figure}

For a collapsible parent $\Gamma'=\parent{\Gamma_t}$,
we define $\widetilde{\tree}=\collapse{\tree,\Gamma'}$
and update the merged node Q-estimate analogously to~\eqref{eqn:Q-est-child-parent}.

\begin{remark}
We use Q-estimates rather than critic values to guide structural updates
because the performance degradation in Theorem~\ref{thm:hier-sub-opt}
stems from the SAD constraint, which arises from the non-commutativity of maximization and averaging (see Remark~\ref{rmk:agg-delta}).
This effect is action-dependent and therefore cannot be detected from value estimates alone.
\end{remark}

\begin{remark}
Although the Q-estimate update in~\eqref{eqn:Q-est} appears independent of the aggregated actor–critic, it is indirectly coupled through the stationary distribution $\mu_\gamma$ within each superstate, which is induced by the actor–critic policy. 
Consequently, the Q-estimate cannot be learned independently in an off-policy manner.
\end{remark}

\subsection{Node Expansion and Collapse Criteria}

By Lemma~\ref{lmm:agg-ac-convergence}, the aggregated actor--critic
converges to the optimal aggregated policy for a fixed tree $\tree$.
Since the structural updates evolve on a slower timescale,
we adopt the standard multi-timescale argument:
when updating the partition, the actor--critic is treated as converged.
Thus, the structural optimization reduces to selecting a tree $\tree$,
with the corresponding optimal policy $\pi_\tree^*$ obtained implicitly.

We therefore consider the regularized objective
\begin{equation}
\label{eqn:lagrangian-max-q-est}
\max_{\tree}
\;\;
J(\pi_\tree^*;\mu_0)
-
\lambda |\leafNode{\tree}|.
\end{equation}

With only the aggregated value in hand, we use the approximation  $J(\pi^*_\tree;\mu_0) \approx \sum_{\gamma} \sum_{s \in \gamma} 
\mu_0(s)\aggV^*_{\tree,\mu^*}(\gamma)$.
Subtracting the constant $V^*(s)$ and bounding the difference by its absolute value yields the equivalent minimization
\[
\min_{\tree} 
\sum_{\gamma\in \leafNode{\tree}} \sum_{s\in\gamma} 
\mu_0(s)\big| \aggV^*_{\tree,\mu^*}(\gamma)-V^*(s)\big|
+ \lambda |\leafNode{\tree}|.
\]
Further bounding the weighted sum by the sup norm gives
\[
\min_{\tree} 
\norm{\Psi_\tree \aggV^*_{\tree,\mu^*} - V^*}_\infty
+ \lambda |\leafNode{\tree}|.
\]
Applying Corollary~\ref{cor:hier-sub-opt} leads to the final optimization problem:
\begin{equation}
    \label{eqn:tree-obj}
    \min_{\tree} ~
\frac{\bar \epsilon_\tree}{1-\beta}
+ \lambda |\leafNode{\tree}|,
\end{equation}
where $\bar \epsilon_\tree =
\norm{\Psi_\tree \bellop{\tree,\mu^*}\aggV^*_{\tree,\mu^*}
- \bellop{\S}\Psi_\tree \aggV^*_{\tree,\mu^*}}_\infty$.
This objective still captures the fundamental trade-off:
reducing abstraction error versus maintaining a compact tree.

\paragraph{Expansion criterion.}
Suppose the current quadtree is $\tree$, and let $\tree'$ be obtained by expanding a leaf $\gamma \in \leafNode{\tree}$. 
The error~\eqref{eqn:tree-obj} of the two trees are
\[
\frac{\bar \epsilon_\tree}{1-\beta} + \lambda |\leafNode{\tree}|
\quad\text{and}\quad
\frac{\bar \epsilon_{\tree'}}{1-\beta} + \lambda |\leafNode{\tree'}|.
\]
Since expanding one leaf in a quadtree adds three leaf nodes, 
a greedy one-step lookahead therefore expands $\gamma$ if
\[
\Delta \bar \epsilon_{\tree,\gamma}
:= \bar \epsilon_\tree - \bar \epsilon_{\tree'}
\;\ge\; 3(1-\beta)\lambda.
\]

Direct evaluation of $\Delta \bar \epsilon_{\tree,\gamma}$ requires the full model and state-level Bellman operator $\bellop{\S}$, which defeats the purpose of aggregation. 
Instead, we approximate the improvement using Q-estimates. 
Since $\Delta \bar \epsilon_{\tree,\gamma}$ captures the benefit of allowing children of $\gamma$ to adopt distinct action distributions (rather than a shared one), we approximate this gain by comparing the best action performance at $\gamma$ with the best performance attainable by its children:
\begin{equation}
    \label{eqn:expansion-error-epsilon}
    \hat \Delta \bar \epsilon_{\tree,\gamma}
    \!:= \!\!\!\!\!
    \max_{\gamma'\in\children{\gamma}} \max_{a\in\A} \aggQ_t^\est(\gamma',a)
    \!-\!
    \max_{a\in\A} \aggQ_t^\est(\gamma,a).
\end{equation}
This leads to the following expansion criterion:
\begin{equation}
\label{eqn:expansion-criteria}
\hat \Delta \bar \epsilon_{\tree,\gamma} \;\ge\; 3(1-\beta)\lambda.
\end{equation}

Upon expansion, the children superstates inherit actor and critic values from the parent via a soft update.
The \texttt{Expansion} routine is presented in Appendix~\ref{apdx-sec:code}.

\paragraph{Collapse criterion.}

Analogous to the expansion case, let $\tree'$ denote the tree obtained by collapsing a node $\gamma$.
Collapse is beneficial if
\(
\bar \epsilon_{\tree'} - \bar \epsilon_\tree
<
3(1-\beta)\lambda.
\)
Using the same Q-based approximation,
we collapse $\gamma$ when
\begin{equation}
\label{eqn:collapse-criteria}
\hat \Delta \bar \epsilon_{\tree',\gamma}
<
3(1-\beta)\lambda.
\end{equation}

When a node is collapsed, its actor-critic value is reinitialized from its children through a soft update, as specified in the \texttt{Collapse} routine in Appendix~\ref{apdx-sec:code}.

The complete learning procedure, integrating adaptive aggregation with aggregated actor–critic updates, is summarized in Algorithm~\ref{alg:abstraction-learning} in Appendix~\ref{apdx-sec:code}.

\section{Numerical Examples}

We evaluate the proposed method on both discrete navigation and continuous-control tasks and compare its performance against several learning-based baselines.

% \paragraph{Learned abstractions and policies.}
Figure~\ref{fig:abstraction-sequence} shows the sequence of abstractions learned until convergence at 15,000 episodes.
The color map represents the aggregated value function $\aggV$, with warmer colors indicating higher values.

The refinement process begins near the goal and propagates outward as value information spreads through the state space.
Furthermore, the algorithm adaptively refines the abstraction near obstacles and narrow corridors, allowing fine-grained maneuvers in critical regions while preserving coarser partitions elsewhere. 

\begin{figure}[h]
    \centering
    \includegraphics[width=0.7\linewidth]{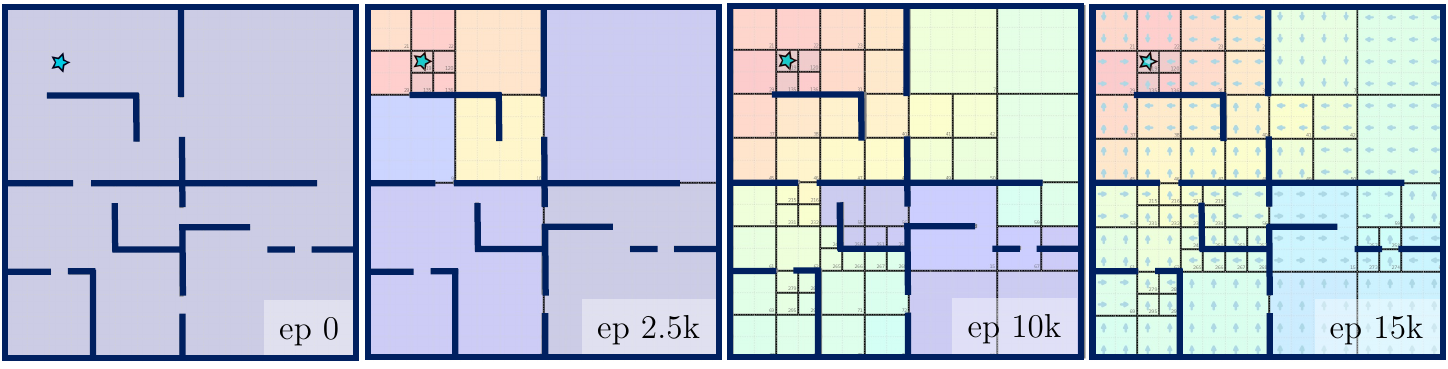}
    \vspace{-0.05in}
    \caption{
    Sequence of learned quadtrees in a $16\!\times\!16$ grid.
    % Colors indicate the aggregated value function.
    % Refinement concentrates near obstacles and narrow corridors.
    }
    \vspace{-0.1in}
    \label{fig:abstraction-sequence}
\end{figure}

% \paragraph{Reusing learned abstractions.}
We next relocate the goal from the top-left to the bottom-right corner as shown in Figure~\ref{fig:abstraction-sequence-replan}.
While the actor–critic values are reinitialized, the algorithm is warm-started with the previously learned abstraction. 
As a result, the algorithm converges much faster and quickly adapts by refining near the new goal and coarsening regions no longer requiring high resolution.
This illustrates the benefit of retaining task-relevant structure across related tasks.
Nonetheless, some regions near the previous goal remain overly refined due to exploration bias:
since action selections are biased toward fine-grained maneuvers, alternative actions are not adequately evaluated. 

\begin{figure}[h]
    \centering
    \includegraphics[width=0.63\linewidth]{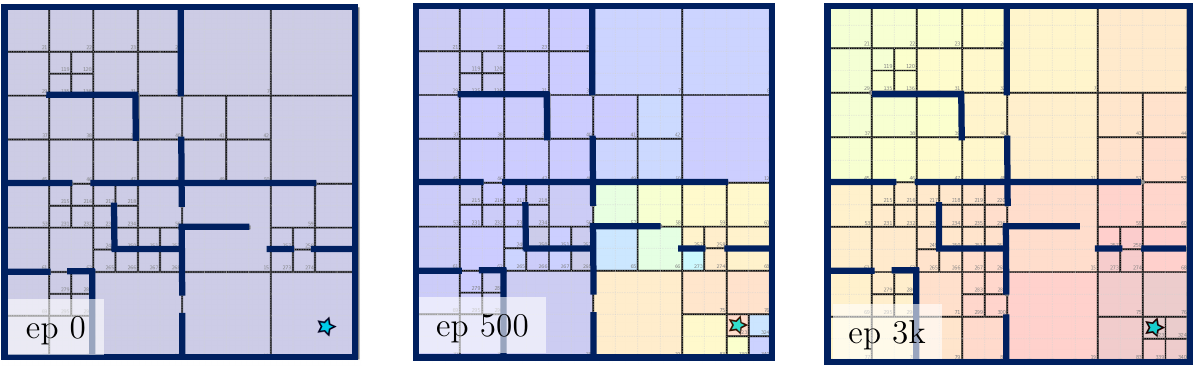}
    \vspace{-0.1in}
    \caption{Sequence of learned quadtree abstractions, starting with a previously learned abstraction. }
    % \vspace{-0.2in}
    \label{fig:abstraction-sequence-replan}
\end{figure}

Figure~\ref{fig:aggregation_examples} shows learned policies and state aggregations in randomly generated mazes.
Subplot (a) presents an $8\times8$ maze.
Subplots (b) and (c) depict a refined $16\times16$ version of the same maze augmented with two additional narrow corridors highlighted in cyan.
Abstractions and policies in (b) and (c) are learned with $\lambda=10$ and $\lambda=5$, respectively.
Subplot (d) shows a $32\times32$ maze.

\begin{figure}[t]
    % \vspace{-0.1in}
    \centering
    \includegraphics[width=0.6\linewidth]{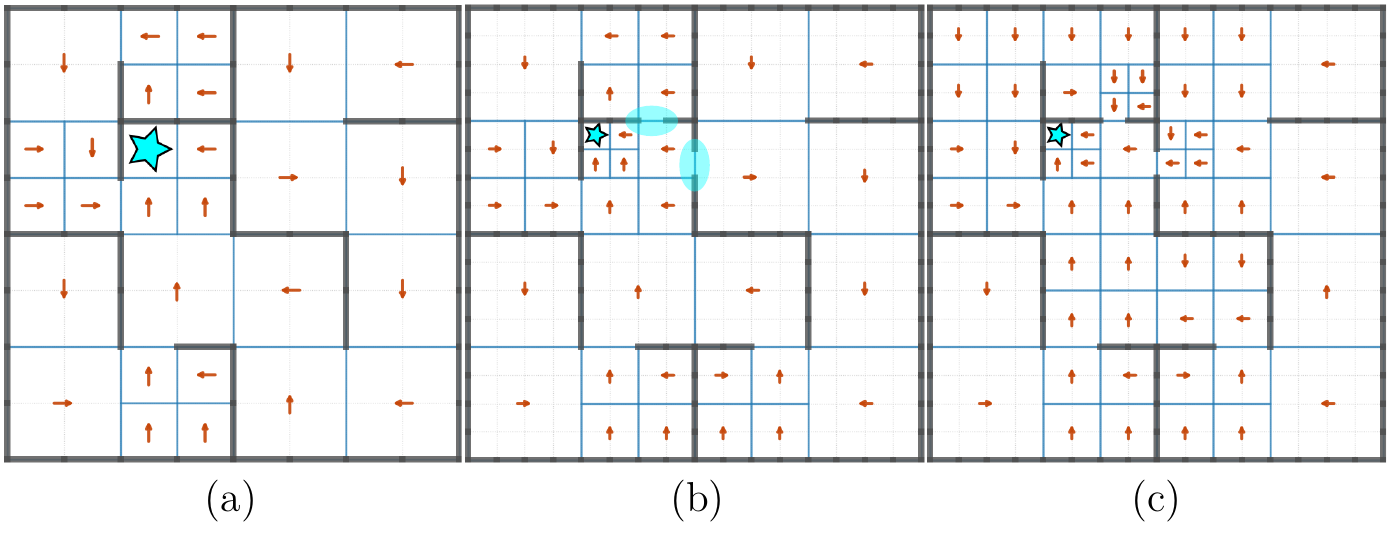}
    \vspace{-0.15in}
    \caption{
    Learned aggregated policies and state aggregations in random mazes.
    (a) $8\times8$ maze.
    (b)–(c) $16\times16$ maze with added corridors, learned with $\lambda=10$ and $\lambda=5$.
    }
    \vspace{-0.15in}
    \label{fig:aggregation_examples}
\end{figure}

The learned abstractions significantly compress the state space:
28 superstates out of 64 states in (a),
34/256 in (b),
55/256 in (c),
and 232/1024 in (d).
Despite this compression, the algorithm preserves task-relevant topological structures.
Relating to robotics applications, the $16 \times 16$ grid in (b) and (c) can be interpreted as a higher-resolution environment model of the $8 \times 8$ model in (a). 
% The policy and aggregation in subplot (b) are learned with penalty multiplier $\lambda = 10$, and (c) with $\lambda = 5$. 
The policies and partitions in (a) and (b) are nearly identical except near the goal, because the algorithm in (b), with $\lambda = 10$, ignores the narrow corridors and produces a suboptimal policy in exchange for fewer aggregated states. 
In contrast, the lower penalty ($\lambda = 5$) in (c) encourages use of the added corridors, yielding different policies that exploit the shorter paths. 

% In the larger $32\times32$ maze (d), refinement concentrates only in regions requiring precise maneuvers, demonstrating scalable and selective use of high-resolution representations.

Figure~\ref{fig:training-curves} compares the proposed method with a flat actor--critic baseline, the quantizer-based hierarchical actor–critic~\citep{baras2000learning}, and the CAT algorithm~\citep{dadvar2023conditional} across 40 randomly generated $16\times16$ navigation grids, a $128\times128$ Mars terrain map, and a discretized Mountain Car control problem.
We evaluate three variants of the proposed tree-based actor--critic algorithm:
\texttt{Tree-AC}, initialized with a single superstate corresponding to the root node;
\texttt{Tree-AC-Depth2}, initialized with a depth-2 tree; 
and
\texttt{Tree-AC-Replan}, warm-started from a previously learned partition after the goal location is changed.

Across all applicable scenarios, \texttt{Tree-AC-Replan} converges the fastest, demonstrating the benefit of reusing a previously learned abstraction to adapt to task changes.%
\footnote{
We do not evaluate \texttt{Tree-AC-Replan} on the Mountain Car task because the policy learned for the original goal can already reach all target positions.
}
Moreover, \texttt{Tree-AC-Depth2} converges faster than \texttt{Tree-AC}, demonstrating the benefit of the proposed approach’s flexibility in allowing the user to specify the initial abstraction granularity based on the task.

\begin{figure}[h]
    % \vspace{-0.1in}
    \centering
    \includegraphics[width=0.8\linewidth]{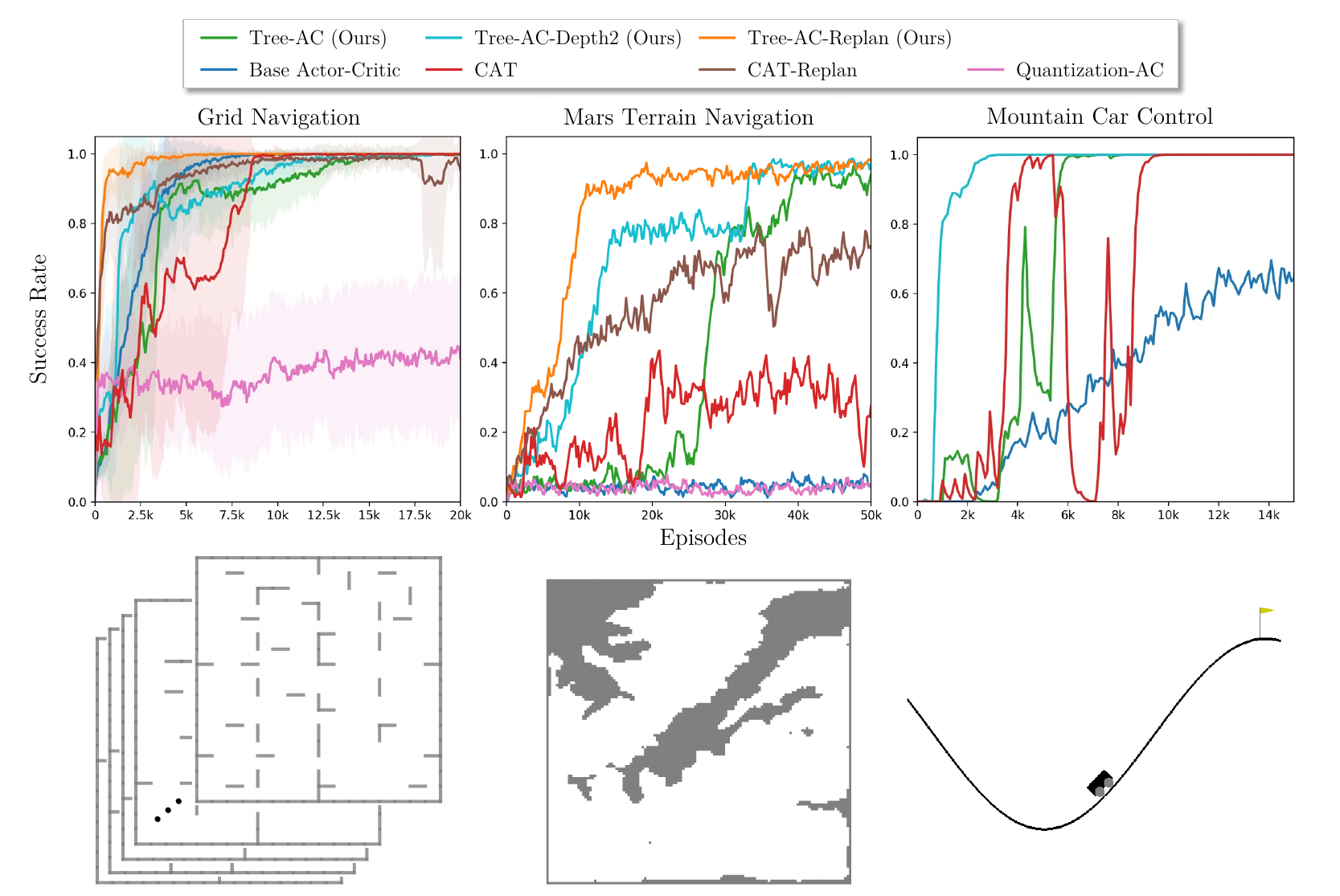}
    % \vspace{-0.1in}
    \caption{Training curves of algorithms on discrete navigation tasks and the discretized Mountain-Car control problem. }
    \label{fig:training-curves}
\end{figure}

In the $16\times16$ grid-navigation task, \texttt{BaseActorCritic} initially converges faster than \texttt{Tree-AC} initialized at depth 0, indicating that additional training is required to learn an effective abstraction.
The \texttt{Tree-AC-Depth2} variant learns faster than \texttt{BaseActorCritic} during the early stages of training but slows during the intermediate stage, as it must further refine the abstraction before converging to a high-performing policy.
The \texttt{Tree-AC-Replan} variant substantially outperforms \texttt{BaseActorCritic}, highlighting the benefit of reusing a learned environment representation when adapting to a new task.
Moreover, both \texttt{Tree-AC} variants outperform their corresponding \texttt{CAT} variants, as well as \texttt{Quantization-AC}.

The advantages of \texttt{Tree-AC} become more pronounced in the larger $128\times128$ Mars terrain-navigation task and the Mountain Car control problem.
In particular, \texttt{BaseActorCritic} struggles to learn effective policies in both settings, while \texttt{CAT} exhibits training instability, especially in the Mountain Car task.

Finally, we investigate how the initial tree depth affects the number of training episodes required to achieve a 95\% success rate in the grid-navigation problem.
The U-shaped trend in the left subplot of Figure~\ref{fig:pareto} reveals a trade-off between abstraction granularity and sample efficiency.
When initialized at a large depth, the algorithm begins with a fine-grained abstraction that enables more informative feedback but introduces a high-dimensional decision space, thereby slowing learning.
At the other extreme, aggregating all states into a single superstate provides insufficient resolution for collecting informative feedback and requires additional episodes to expand the abstraction to an effective level.
Consequently, initializing the algorithm at an intermediate tree depth minimizes the number of training episodes required to achieve a high success rate, consistent with the training curves shown in Figure~\ref{fig:training-curves}.

The right subplot of Figure~\ref{fig:pareto} presents the Pareto frontier between abstraction size and policy performance, highlighting the trade-off between the two.
As the abstraction becomes finer, the algorithm achieves a 99\% success rate before fully refining the policy.
Consequently, the two solutions with the largest abstraction sizes are dominated by smaller abstractions with comparable performance and therefore do not lie on the Pareto frontier.
At the other extreme, abstractions containing fewer than 20\% as many superstates as the original state space contains states fail to achieve a 60\% success rate.
These low-performing solutions are omitted from the plot for clarity.

\begin{figure}[b]
    \centering
    \includegraphics[width=0.9\linewidth]{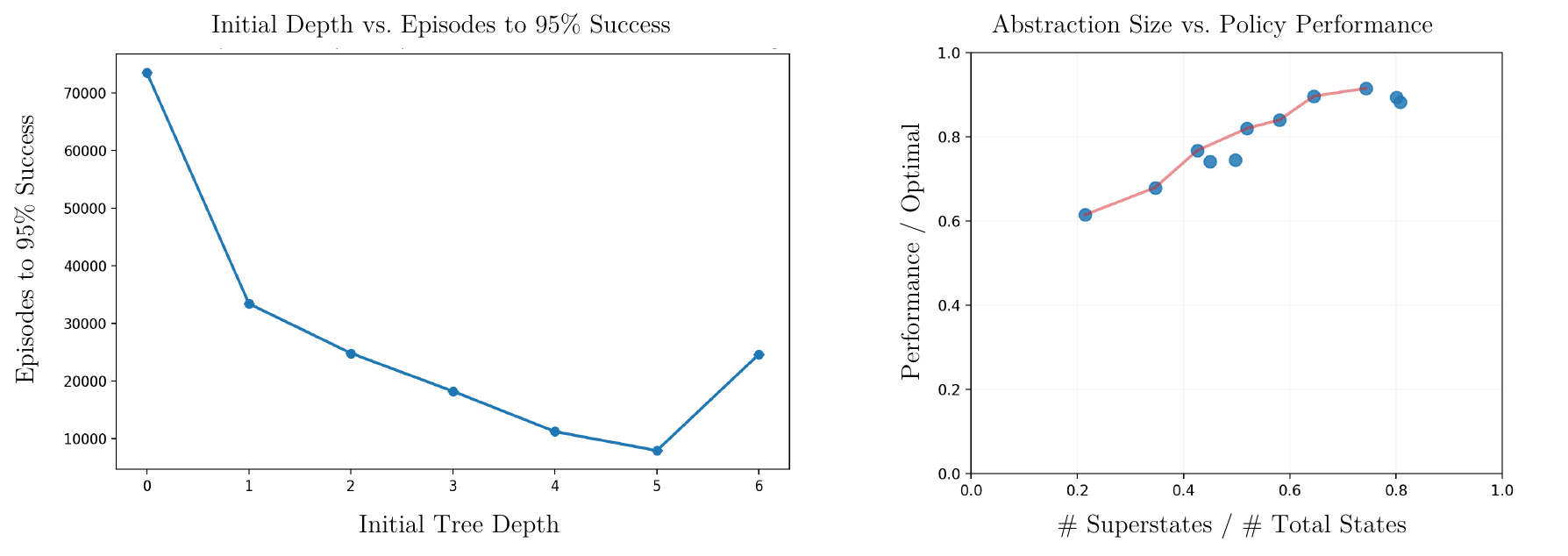}
    \vspace{-0.1in}
    \caption{\textit{Left:} Effect of the initial tree depth on the episodes required to reach a 95\% success rate.
    \textit{Right:} The Pareto frontier between abstraction size and policy performance.}
    \label{fig:pareto}
    \vspace{-0.1in}
\end{figure}

% \begin{figure}[t]
%     \centering
%     \includegraphics[width=0.9\linewidth]{figures/training_curves.png}
%     \vspace{-0.1in}
%     \caption{
%     Learning curves comparing tree-based actor–critic variants with the flat baseline.
%     }
%     \label{fig:training-curve-16}
%     \vspace{-0.2in}
% \end{figure}

\section{Conclusions}

We studied performance-driven abstractions for decision-making in large state spaces.
For a fixed partition, we established a performance guarantee that separates value-approximation error from the loss induced by the same-action-distribution (SAD) constraint.
Motivated by this analysis, we developed a multi-timescale learning framework that jointly optimizes the policy and a tree-based state abstraction.
The resulting algorithm adaptively refines and coarsens regions of the state space to balance performance and representational complexity.
Empirical results demonstrate that the learned hierarchical structure enables substantial state compression, accelerates replanning, and improves adaptability compared to flat actor–critic baselines.
A key future direction is to move beyond the current one-level greedy lookahead and establish theoretical guarantees for multi-level or globally optimal abstraction updates.
Another promising extension is to apply the framework beyond grid domains, including unstructured environments and joint state spaces in multi-agent systems.

% Several extensions remain promising.
% While the current formulation assumes a homogeneous action space and a predefined hierarchical tree (e.g., quadtree), the framework naturally suggests data-driven tree construction and topology-aware basis generation, such as proto-value functions.
% Moreover, although we approximate the performance certificate via a greedy one-step lookahead criterion, deriving explicit theoretical guarantees for this approximation would further strengthen the approach.
% Finally, extending the framework beyond grid domains—to unstructured environments or joint state spaces in multi-agent systems—offers a compelling direction for future research.

% References

\bibliography{references}

% \onecolumn

% \title{Performance-Driven Environment Abstraction with Multi-Timescale Learning\\(Supplementary Material)}
% \maketitle

\newpage
\appendix
\renewcommand{\thesection}{Appendix \Alph{section}}

\section{Proof of Theorem~\ref{thm:hier-sub-opt}}
\label{appdx-sec:thm-sub-opt-proof}
\thmaggopt*
\begin{proof}
    We first decompose the error $\norm{\Psi_\Gamma \bar V^*_{\Gamma,\mu^*} - V^*}$ into two parts. 
    \begin{align}
         \norm{\Psi_\Gamma \bar V^*_{\Gamma,\mu^*} -  V^*}_{\infty} &\leq \norm{\Psi_\Gamma \bar V^*_{\Gamma,\mu^*} -  \Psi_\Gamma\bar V_\opt}_\infty + \norm{\Psi_\Gamma \bar V_\opt - V^*}_\infty \nonumber\\
         &= \norm{\Psi_\Gamma \bar V^*_{\Gamma,\mu^*} -  \Psi_\Gamma \bar V_\opt}_\infty + \epsilon_\Gamma \nonumber\\
         &\leq \norm{\bar V^*_{\Gamma,\mu^*} -  \bar V_\opt}_\infty + \epsilon_\Gamma,
         \label{eqn:bound-1}
    \end{align}
    where we use the non-expansiveness of $\Psi_\Gamma$ for the last inequality.
    To bound the first term, we consider
    \begin{align*}
        \norm{\bar V_\opt - \bellop{\Gamma, \mu^*} \bar V_\opt}_\infty
        &\leq \norm{\Psi^{+}_{\Gamma,\mu^*} \circ\Psi_\Gamma  \bar V_\opt - \Psi^{+}_{\Gamma,\mu^*} \circ \bellop{\S} \circ \Psi_\Gamma  \bar V_\opt}_\infty + \underbrace{\norm{\Psi^{+}_{\Gamma,\mu^*} \circ \bellop{\S} \circ \Psi_\Gamma  \bar V_\opt - \bellop{\Gamma,\mu^*} \bar V_\opt}_\infty}_{\delta_\Gamma}\\
        &\stackrel{(\text{i})}{\leq} \norm{\Psi_\Gamma  \bar V_\opt -  \bellop{\S} \circ \Psi_\Gamma  \bar V_\opt}_\infty + \delta_\Gamma\\
        &\leq \norm{\Psi_\Gamma  \bar V_\opt - V^*}_\infty + \norm{V^* - \bellop{\S} \circ \Psi_\Gamma  \bar V_\opt}_\infty +\delta_\Gamma\\
        &\stackrel{(\text{ii})}{=}  \norm{\Psi_\Gamma  \bar V_\opt - V^*}_\infty + \norm{\bellop{\S} V^* - \bellop{\S} \circ \Psi_\Gamma  \bar V_\opt}_\infty +\delta_\Gamma \\
        &\stackrel{(\text{iii})}{\leq} \epsilon_\Gamma + \beta \epsilon_\Gamma + \delta_\Gamma,
    \end{align*}
    where inequality (i) follows from the non-expansiveness of $\Psi^+_{\Gamma,\mu^*}$; equality (ii) uses the fact that $V^*$ is the fixed point of the Bellman operator $\bellop{\S}$, i.e., $\bellop{\S} V^* = V^*$, and inequality (iii) is due to $\bellop{\S}$ being a $\beta$-contractive operator.
    
    Then, we have 
    \begin{align*}
        \norm{\bar V_{\Gamma,\mu^*}^* - \bar V_{\opt}}_\infty
        &\leq \norm{\bar V_{\Gamma,\mu^*}^* -  \bellop{\Gamma,\mu^*} \bar V_{\opt}}_\infty + \norm{\bellop{\Gamma, \mu^*} \bar V_{\opt} - \bar V_{\opt}}_\infty\\
        & = \norm{ \bellop{\Gamma, \mu^*} V_{\Gamma,\mu^*}^* -  \bellop{\Gamma, \mu^*} \bar V_{\opt}}_\infty + \norm{\bellop{\Gamma, \mu^*} \bar V_{\opt} - \bar V_{\opt}}_\infty \\
        &\leq \beta  \norm{\bar V_{\Gamma,\mu^*}^* - \bar V_{\opt}}_\infty + (1+\beta) \epsilon_\Gamma + \delta_\Gamma,
    \end{align*}
    which implies that 
    \begin{equation}
        \norm{\bar V_{\Gamma,\mu^*}^* - \bar V_{\opt}}_\infty \leq \frac{(1+\beta)\epsilon_\Gamma + \delta_\Gamma}{1-\beta}.
    \end{equation}
    Substituting into~\eqref{eqn:bound-1}, we obtain
    \begin{equation}
        \norm{\Psi_\Gamma \bar V^*_{\Gamma,\mu^*} -  V^*}_\infty \leq \frac{(1+\beta)\epsilon_\Gamma + \delta_\Gamma}{1-\beta} + \epsilon_\Gamma = \frac{2\epsilon_\Gamma + \delta_\Gamma}{1-\beta}.
    \end{equation}
\end{proof}

\section{Proof of Corollary~\ref{cor:hier-sub-opt}}
\label{appdx-sec:cor-sub-opt-proof}
\coraggopt*
\begin{proof}
    Notice that $\aggV^*_{\Gamma, \mu^*}$ is the fixed point of $\bellop{\Gamma,\mu^*}$ and $V^*$ is the fixed point of $\bellop{\S}$. 
    Thus, we have 
    \begin{align*}
        \norm{\Psi_\Gamma \aggV^*_{\Gamma, \mu^*} - V^*}_{\infty} &=
        \norm{\Psi_\Gamma \bellop{\Gamma,\mu^*}\aggV^*_{\Gamma, \mu^*} - \bellop{\S}V^*}_{\infty} \\
        &\leq \norm{\Psi_\Gamma \bellop{\Gamma,\mu^*}\aggV^*_{\Gamma, \mu^*} - \bellop{\S } \Psi_\Gamma \aggV^*_{\Gamma, \mu^*}}_{\infty} +
        \norm{\bellop{\S } \Psi_\Gamma \aggV^*_{\Gamma, \mu^*} -
        \bellop{\S}V^*}_{\infty} \\
        &\leq \norm{\Psi_\Gamma \bellop{\Gamma,\mu^*}\aggV^*_{\Gamma, \mu^*} - \bellop{\S } \Psi_\Gamma \aggV^*_{\Gamma, \mu^*}}_{\infty} +
        \beta \norm{\Psi_\Gamma \aggV^*_{\Gamma, \mu^*} -
        V^*}_{\infty}.
    \end{align*}
    Rearranging terms yields
    \begin{equation*}
        \norm{\Psi_\Gamma \aggV^*_{\Gamma, \mu^*} - V^*}_{\infty} \leq \frac{\norm{\Psi_\Gamma \bellop{\Gamma,\mu^*}\aggV^*_{\Gamma, \mu^*} - \bellop{\S } \Psi_\Gamma \aggV^*_{\Gamma, \mu^*}}_{\infty}}{1-\beta}.
    \end{equation*}
    
\end{proof}

\section{Pseudo-code for the Learning Algorithm }
\label{apdx-sec:code}

\begin{algorithm}[H]
\SetAlCapFnt{\normalsize}   % Caption font
\SetAlCapNameFnt{\normalsize} % "Algorithm X" font
\caption{Expansion}
\SetKwFunction{FExpandNode}{ExpandNode}
\SetKwProg{Fn}{Function}{:}{}

\Fn{\FExpandNode{$\tree, \gamma, \aggQ, \aggV, \aggQ^\est$}}{
  $\tree \gets \expand{\tree, \gamma}$\;
  $\aggQ(\gamma', a) \gets (1-\rho)\aggQ(\gamma, a) + \rho \aggQ^\est(\gamma', a)$ \qquad for all $\gamma' \in \children{\gamma}$ and $a \in \A$ \;
  $\aggV(\gamma') \gets (1-\rho)\aggV(\gamma) + \rho \max_{a\in \A} \aggQ^\est(\gamma', a)$ \quad for all $\gamma' \in \children{\gamma}$ \;
}

\end{algorithm}

\begin{algorithm}[H]
\SetAlCapFnt{\normalsize}   % Caption font
\SetAlCapNameFnt{\normalsize} % "Algorithm X" font
\caption{Collapse}
\SetKwFunction{FCollapseNode}{CollapseNode}
\SetKwProg{Fn}{Function}{:}{}

\Fn{\FCollapseNode{$\tree, \gamma, \aggQ, \aggV, \aggQ^\est$}}{
  $\tree \gets \collapse{\tree, \gamma}$\;
  $\aggQ(\gamma, a) \gets  (1-\rho)\sum_{\gamma' \in \children{\gamma}}\frac{1}{|\children{\gamma}|} \aggQ(\gamma', a) + \rho \aggQ^\est(\gamma, a)$ \qquad ~~ for all $a \in \A$ \;
  $\aggV(\gamma') \gets (1-\rho)\sum_{\gamma' \in \children{\gamma}}\frac{1}{|\children{\gamma}|}\aggV(\gamma') + \rho \max_{a\in \A} \aggQ^\est(\gamma, a)$ \;
}

\end{algorithm}

\begin{algorithm}[h!]
\SetAlCapFnt{\normalsize}   % Caption font
\SetAlCapNameFnt{\normalsize} % "Algorithm X" font
\caption{Tree-based Abstraction Learning}
\label{alg:abstraction-learning}
\SetKwFunction{FMain}{Main}
\SetKwFunction{FUpdateQest}{UpdateQest}
\SetKwFunction{FExpandCond}{CheckExpand}
\SetKwFunction{FCollapseCond}{CheckCollapse}

\SetKwProg{Fn}{Function}{:}{}
\Fn{\FMain{}}{
    Initialize tree $\tree$\;
    Initialize $\aggV_0(\gamma)$, $\aggQ_0(\gamma, a)$ for all $\gamma \in \leafNode{\tree}$ and $a \in \A$\;
    Initialize $\aggQ_0^\est(\gamma', a)$ for all $\gamma' \in \children{\gamma} \cup \parent{\gamma}$, $\gamma \in \leafNode{\tree}$\;
    Initialize learning rates $\xi_0, \zeta_0, \eta_0$\;
    \For{$\mathrm{ep}=0 \text{ to } \mathrm{ep}_{\mathrm{max}}$}{
    Initialize starting state $S_t$\;
    \While{$S_t$ is not terminal}{
        Find current superstate $\Gamma_t = \phi_{\tree}(S_t)$ and sample action $A_t$ according to $\psi_{\aggQ}(\Gamma_t, \cdot)$\;
        Take action $A_t$, observe reward $R_t$ and next state $S_{t+1}$ and superstate $\Gamma_{t+1}=\phi_{\tree} (S_{t+1})$\;
        Update Aggregated Actor-Critic according to~\eqref{eqn:agg-ac}\;
        % \begin{align*}
        %     \aggV_{t+1}(\Gamma_t) &=  (1-\xi_t) \aggV_{t}(\Gamma_t) + \xi_t (R_t + \beta \aggV_{t}(\Gamma_{t+1})),\\
        %     \aggQ_{t+1}(\Gamma_t, A_t) &= \proj{q} \Big((1-\zeta_t) \aggQ_{t}(\Gamma_t, A_t) + \zeta_t (R_t + \beta \aggV_{t}(\Gamma_{t+1}))\Big)
        % \end{align*}
        Perform \FUpdateQest{$\tree_\mathrm{input} = \tree,S_t, A_t, S_{t+1},\eta_t$}\;

        \If{$\Gamma_t$ is expandable}{
            Perform \FUpdateQest{$\tree_\mathrm{input}=\expand{\tree, \Gamma_t},S_t, A_t, S_{t+1},\eta_t$}\;
        }
        \If{$\parent{\Gamma_t}$ is collapsible}{
            Perform \FUpdateQest{$\tree_\mathrm{input}=\collapse{\tree, \parent{\Gamma_t}},S_t, A_t, S_{t+1},\eta_t$}\;
        }
        $t \gets t+1$\;
    }
    \If{$t \bmod \text{tree\_update\_frequency} = 0$}{
        % Initialize empty lists \texttt{expand\_node\_list}, \texttt{collapse\_node\_list} \;
        \For{$\gamma \in \leafNode{\tree}$}{
            \If{\FExpandCond{$\tree, \aggQ^\est, \gamma$}}{
                append $\gamma$ to \texttt{expand\_node\_list} \;
            }
            \If{\FCollapseCond{$\tree, \aggQ^\est, \parent{\gamma}$}}{
                append $\parent{\gamma}$ to \texttt{collapse\_node\_list} \;
            }
        
        }
        \FExpandNode{$\tree, \texttt{expand\_node\_list}, \aggQ, \aggV, \aggQ^\est$}\;
        \FCollapseNode{$\tree, \texttt{collapse\_node\_list}, \aggQ, \aggV, \aggQ^\est$}\;
}

% \textbf{Return:} Q-sets $\{\nsp{k}{i}\}_{k\leq \min\{\hmax, k_\infty\}, i \in \nodeset}$, initial allocation $\yinit = \vy^{(i_{-1}^*)}$, guaranteed breach time 
\textbf{Return:} Tree $\tree$, Policy $\Psi_{\aggQ}$
}
}

\Fn{\FUpdateQest{$\tree_\mathrm{input}, S_t, A_t, S_{t+1}, \eta_t$}}{
    $\Gamma_t = \phi_{\tree_\mathrm{input}}(S_t)$, 
    $\Gamma_{t+1} = \phi_{\tree_\mathrm{input}}(S_{t+1})$\;
    Update Q-estimate according to~\eqref{eqn:Q-est}\;
    % \[\aggQ_{t+1}^{\est}(\Gamma_{t}, A_t) = (1-\eta_t)\aggQ_{t}^{\est}(\Gamma_{t}, A_t) + \eta_t \Big(R_t + \beta \max_{a' \in \A} \aggQ_{t}^{\est}(\Gamma_{t+1}, a')\Big)\]
}
\Fn{\FExpandCond{$\tree_\mathrm{input}, S_t, A_t, S_{t+1}, \eta_t$}}{
    $\Gamma_t = \phi_{\tree_\mathrm{input}}(S_t)$ \;
    Evaluate $\hat \Delta \bar \epsilon_{\tree_\mathrm{input}, \Gamma_t}$ according to~\eqref{eqn:expansion-error-epsilon}\;
    \textbf{Return:} \texttt{True} if $\hat \Delta \bar \epsilon_{\tree_\mathrm{input}, \gamma} \geq 3 (1-\beta) \lambda$, \texttt{False} otherwise. 
    
}
\Fn{\FCollapseCond{$\tree_\mathrm{input}, S_t, A_t, S_{t+1}, \eta_t$}}{
    $\Gamma_t = \phi_{\tree_\mathrm{input}}(S_t)$, $\Gamma'_t = \parent{\Gamma_t}$, $\tree' = \collapse{\tree_\mathrm{input}, \Gamma'_t}$\;
    Evaluate $\hat \Delta \bar \epsilon_{\tree', \Gamma'_t}$ according to~\eqref{eqn:expansion-error-epsilon}\;
    \textbf{Return:} \texttt{True} if $\hat \Delta \bar \epsilon_{\tree', \Gamma'_t} < 3 (1-\beta) \lambda$, \texttt{False} otherwise. 
}
\end{algorithm}

\end{document}